\documentclass{article}
\usepackage[margin=1.125in]{geometry}
\usepackage[round,comma]{natbib}
\usepackage{amsmath}
\usepackage{amssymb,graphicx,dsfont}

\usepackage{amsthm}
\usepackage{thmtools}
\usepackage{subfig} 
\usepackage[usenames,dvipsnames]{color}
\usepackage{hyperref}
\usepackage{cleveref}
\usepackage{mathrsfs}
\usepackage{wrapfig}
\usepackage{mathtools}
\usepackage{nicefrac}

%
\def\ddefloop#1{\ifx\ddefloop#1\else\ddef{#1}\expandafter\ddefloop\fi}
\def\ddef#1{\expandafter\def\csname b#1\endcsname{\ensuremath{\mathbf{#1}}}}
\ddefloop ABCDEFGHIJKLMNOPQRSTUVWXYZ\ddefloop
\def\ddef#1{\expandafter\def\csname bb#1\endcsname{\ensuremath{\mathbb{#1}}}}
\ddefloop ABCDEFGHIJKLMNOPQRSTUVWXYZ\ddefloop
\def\ddef#1{\expandafter\def\csname c#1\endcsname{\ensuremath{\mathcal{#1}}}}
\ddefloop ABCDEFGHIJKLMNOPQRSTUVWXYZ\ddefloop
\def\ddef#1{\expandafter\def\csname v#1\endcsname{\ensuremath{\boldsymbol{#1}}}}
\ddefloop ABCDEFGHIJKLMNOPQRSTUVWXYZabcdefghijklmnopqrstuvwxyz\ddefloop
\def\ddef#1{\expandafter\def\csname
  v#1\endcsname{\ensuremath{\boldsymbol{\csname #1\endcsname}}}}
\ddefloop
    {alpha}{beta}{gamma}{delta}{epsilon}{varepsilon}{zeta}{eta}{theta}{vartheta}{iota}{kappa}{lambda}{mu}{nu}{xi}{pi}{varpi}{rho}{varrho}{sigma}{varsigma}{tau}{upsilon}{phi}{varphi}{chi}{psi}{omega}{Gamma}{Delta}{Theta}{Lambda}{Xi}{Pi}{Sigma}{varSigma}{Upsilon}{Phi}{Psi}{Omega}\ddefloop

\def\1{\mathds 1}
\def\R{\mathbb R}

\def\cRz{\cR_{\textup{z}}}

\def\fN{\mathfrak N}
\def\fR{\mathfrak R}


\newcommand{\ip}[2]{\left\langle #1, #2 \right \rangle}

\numberwithin{equation}{section}
\declaretheorem[numberlike=equation]{theorem}
\declaretheorem[numberlike=theorem]{lemma}

\declaretheoremstyle[%
qed={\ensuremath\Diamond}]{remstyle}

\def\srelu{\sigma_{\textsc{r}}}

\def\fm{f_{\textup{m}}}

\title{Representation Benefits of Deep Feedforward Networks}
\author{Matus Telgarsky}
\date{}

\begin{document}

\maketitle

\begin{abstract}
  This note provides a family of classification problems, indexed by a positive integer $k$,
  where all shallow networks with fewer than exponentially (in $k$) many nodes exhibit error at least $1/6$,
  whereas a deep network with 2 nodes in each of $2k$ layers achieves zero error,
  as does a recurrent network with 3 distinct nodes iterated $k$ times.
  The proof is elementary,
  and the networks are standard feedforward networks with
  ReLU (Rectified Linear Unit) nonlinearities.
\end{abstract}

\section{Overview}

A \emph{neural network} is a function whose evaluation is defined by a graph as follows.
Root nodes compute $x\mapsto \sigma(w_0 + \ip{w}{x})$, where $x$ is the input to the network,
and  $\sigma:\R\to\R$ is typically a nonlinear function, for instance the ReLU (Rectified Linear Unit) $\srelu(z) = \max\{0,z\}$.
Internal nodes perform a similar computation, but now their input vector is the collective output of their parents.
The choices of $w_0$ and $w$ may vary from node to node, and the possible set of functions obtained by
varying these parameters gives the function class $\fN(\sigma; m,l)$, which has $l$ layers each with at most $m$ nodes.

The representation power of $\fN(\sigma; m, l)$ will be measured via the \emph{classification error} $\cRz$.
Namely, given a function $f:\R^d \to \R$, let $\tilde f : \R^d \to \{0,1\}$ denote the corresponding classifier
$\tilde f(x) := \1[f(x) \geq 1/2]$,
and additionally given a sequence of points $((x_i,y_i))_{i=1}^n$ with $x_i\in\R^d$
and $y_i\in\{0,1\}$,
define $\cRz(f) := n^{-1} \sum_i \1[ \tilde f(x_i) \neq y_i ]$.

\begin{theorem}
  \label{fact:gap:simplified}
  Let positive integer $k$,
  number of layers $l$,
  and number of nodes per layer $m$ be given with $m \leq 2^{(k-3)/l - 1}$.
  Then there exists a collection of $n:= 2^k$ points $((x_i,y_i))_{i=1}^n$
  with $x_i\in [0,1]$ and $y\in \{0,1\}$
  such that
  \[
    \min_{f \in \fN(\srelu;2,2k)} \cRz(f) = 0
    \qquad
    \textup{and}
    \qquad
    \min_{g \in \fN(\srelu;m,l)} \cRz(g) \geq \frac 1 {6}.
  \]
\end{theorem}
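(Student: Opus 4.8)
The plan is to realize the $2^k$ points as a maximally alternating sequence on $[0,1]$, fit them exactly with a width-$2$ deep ReLU network that computes an iterated tent map, and defeat every shallow network by bounding the number of affine pieces — and hence the number of label changes — producible by $\fN(\srelu;m,l)$. Concretely, set $n:=2^k$ and, for $i\in\{0,1,\dots,2^k-1\}$, let $x_i:=i\,2^{-k}\in[0,1]$ and $y_i:=i\bmod 2\in\{0,1\}$, so that consecutive points carry opposite labels.

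For the deep network, I would use the tent map $\fm(z):=\srelu(2z)-\srelu(4z-2)$, which on $[0,1]$ sends $[0,\tfrac12]$ affinely onto $[0,1]$ and $[\tfrac12,1]$ affinely back onto $[0,1]$, and which is an affine combination of two ReLU units. The standard fact to verify is that the $k$-fold composition $\fm^{\circ k}$ is affine on each $[j2^{-k},(j+1)2^{-k}]$ and maps it bijectively onto $[0,1]$ with orientation alternating in $j$; in particular $\fm^{\circ k}(x_i)=i\bmod 2=y_i$. Since each successive application of $\fm$ is, in each of its two coordinates, $\srelu$ of an affine function of the previous layer's two outputs, $\fm^{\circ k}$ is computed by a network of width $2$ and at most $2k$ layers (a few spare layers handle an output readout or a split of $\fm$ into two ReLU layers if the convention requires it). Its induced classifier then satisfies $\tilde f(x_i)=\1[\fm^{\circ k}(x_i)\ge\tfrac12]=\1[i\text{ odd}]=y_i$, so $\min_{f\in\fN(\srelu;2,2k)}\cRz(f)=0$.

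For the shallow networks, I would first show by induction on the depth that every $g\in\fN(\srelu;m,l)$ with $m\ge1$ is piecewise affine with at most $(2m)^l$ pieces: the input cuts $\R$ into one piece, and if after $i$ layers the (at most $m$) node functions are simultaneously affine on $t$ intervals, then on each such interval every node of layer $i+1$ is $\srelu$ of an affine function, contributing at most one new breakpoint there, so the common refinement of layer $i+1$ has at most $(m+1)t$ intervals; as $(m+1)^l\le(2m)^l$ this gives the claim. Hence $\{g\ge\tfrac12\}$ is a union of at most $(2m)^l$ intervals, and the hypothesis $m\le 2^{(k-3)/l-1}$ is exactly the statement $(2m)^l\le 2^{k-3}$, so along $x_0<\dots<x_{n-1}$ the $0/1$ string $\tilde g(x_0),\dots,\tilde g(x_{n-1})$ has at most $2\cdot 2^{k-3}+1=2^{k-2}+1$ maximal runs. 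In a run of $q$ consecutive points the true labels alternate, so a constant prediction errs on at least $\lfloor q/2\rfloor\ge(q-1)/2$ of them; summing, $n\,\cRz(g)\ge(n-\#\text{runs})/2\ge(2^k-2^{k-2}-1)/2$, i.e. $\cRz(g)\ge\tfrac38-2^{-k-1}$. Finally, for the hypothesis to be satisfiable with $m\ge1$ one needs $(k-3)/l\ge1$, hence $k\ge l+3\ge 4$, whence $\cRz(g)\ge\tfrac38-\tfrac1{32}>\tfrac16$; this yields $\min_{g\in\fN(\srelu;m,l)}\cRz(g)\ge\tfrac16$.

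The only genuinely delicate step is the piece-counting lemma and matching its constant to the budget $m\le 2^{(k-3)/l-1}$: one must note that forming affine combinations across the $m$ nodes of a layer multiplies the piece count by roughly $m$ rather than merely adding, so the induction must track the common refinement of all nodes in a layer, not reason node-by-node; one should also dispatch the degenerate case where $g$ equals $\tfrac12$ identically on some piece, which only helps since such a piece merges into an adjacent run without lowering the error count. The tent-map identities and the run-counting arithmetic are then routine.
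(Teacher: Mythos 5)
Your proposal is correct and follows essentially the same strategy as the paper: uniformly spaced alternating-label points, an iterated tent (mirror) map of width $2$ for the zero-error deep network, and a lower bound obtained by counting the affine pieces of any $g\in\fN(\srelu;m,l)$ and hence the number of times $\tilde g$ can change value along the grid. Your bookkeeping differs only cosmetically — a common-refinement induction giving $(m+1)^l$ pieces in place of the paper's per-node $(tm)^l$ sawtooth count, and a ``runs'' argument charging $\lfloor q/2\rfloor$ errors per run instead of the paper's thirds-of-buckets count — which in fact yields a slightly sharper constant ($3/8-2^{-k-1}$ versus $1/6$).
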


For example, approaching the error of the $2k$-layer network (which has $\cO(k)$ nodes and weights)
with $2$ layers requires at least $2^{(k-3)/2-1}$ nodes,
and with $\sqrt{k-3}$ layers needs at least $2^{\sqrt{k-3}-1}$ nodes.

The purpose of this note is to provide an elementary proof of \Cref{fact:gap:simplified}
and its refinement \Cref{fact:gap}, which amongst other improvements will use a \emph{recurrent neural network}
in the upper bound.
\Cref{sec:analysis} will present the proof,
and \Cref{sec:discussion} will tie these results to the literature on neural network expressive power and circuit complexity,
which by contrast makes use of product nodes rather than standard feedforward networks when showing the benefits of depth.

\subsection{Refined bounds}

\begin{wrapfigure}{r}{0.4\textwidth}
  \vspace{-24pt}
  \begin{center}
    \includegraphics[width=0.4\textwidth]{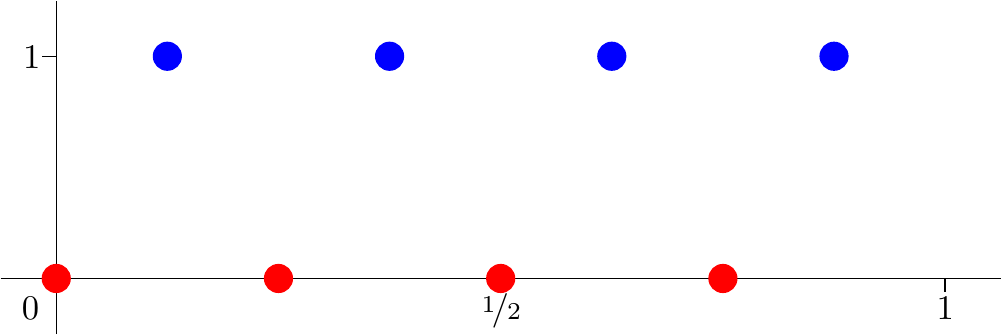}
  \end{center}
   \vspace{-10pt}
  \caption{The $3$-ap.}
  \label{fig:3ap}
  \vspace{-10pt}
\end{wrapfigure}
There are three refinements to make: the classification problem will be specified,
the perfect network will be an even simpler \emph{recurrent} network,
and $\sigma$ need not be $\srelu$.

Let $n$-ap (the \emph{$n$-alternating-point} problem) denote the set of $n$ uniformly spaced points within $[0,1-2^{-n}]$ with alternating labels,
as depicted in \Cref{fig:3ap};
that is, the points $((x_i,y_i))_{i=1}^{n}$ with $x_i = i2^{-n}$, and $y_i = 0$ when $i$ is even, and otherwise $y_i=1$.
As the $x$ values pass from left to right, the labels change as often as possible;
the key
is that adding a constant number of nodes in a flat network only corrects predictions
on a constant number of points, whereas adding a constant number of nodes in a deep network can correct predictions
on a constant \emph{fraction} of the points.

Let $\fR(\sigma;m,l;k)$ denote $k$ iterations of a \emph{recurrent network} with $l$ layers of at most $m$ nodes each, defined as follows.
Every $f\in\fR(\sigma;m,l;k)$ consists of some
fixed network $g\in\fN(\sigma;m,l)$ applied $k$ times:
\[
  f(x) = g^k(x) = \big(\underbrace{g\circ\cdots\circ g}_{\textup{$k$ times}}\big)(x).
\]
Consequently, $\fR(\sigma; m,l;k)\subseteq \fN(\sigma; m, lk)$, but the former has $\cO(ml)$ parameters whereas the latter has $\cO(mlk)$ parameters.

Lastly, say that $\sigma :\R \to \R$ is \emph{$t$-sawtooth} if it is piecewise affine with $t$ pieces,
meaning $\R$ is partitioned into $t$ consecutive intervals, and $\sigma$ is affine within each interval.
Consequently, $\srelu$ is 2-sawtooth, but this class also includes many other functions, for instance the
decision stumps used in boosting are 2-sawtooth,
and decision trees with $t-1$ nodes correspond to $t$-sawtooths.

\begin{theorem}
  \label{fact:gap}
  Let positive integer $k$,
  number of layers $l$,
  and number of nodes per layer $m$ be given.
  Given a $t$-sawtooth $\sigma :\R\to\R$
  and $n := 2^k$ points as specified by the $n$-ap,
  then
  \[
    \min_{f \in \fR(\srelu;2,2;k)} \cRz(f) = 0
    \qquad
    \textup{and}
    \qquad
    \min_{g \in \fN(\sigma;m,l)} \cRz(g) \geq \frac {n - 4(tm)^l}{3n}.
  \]
\end{theorem}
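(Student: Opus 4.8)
The plan is to prove the two inequalities separately: the upper bound by exhibiting an explicit recurrent network whose $k$-fold iterate is the $k$-times iterated ``tent'' map (which by design of the $n$-ap separates all $2^k$ points), and the lower bound by bounding the number of affine pieces a $t$-sawtooth network of the given size can produce and showing that too few pieces cannot track rapidly alternating labels.

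\emph{Upper bound.} Take $g\in\fN(\srelu;2,2)$ given by
\[
  g(x)=\srelu\bigl(2\srelu(x)-4\srelu(x-\tfrac12)\bigr),
\]
i.e.\ the first layer has the two nodes $\srelu(x)$ and $\srelu(x-\tfrac12)$ and the second layer the single node $\srelu(2h_1-4h_2)$. A direct check gives $g(x)=\min\{2x,\,2-2x\}$ for $x\in[0,1]$ and $g([0,1])\subseteq[0,1]$, so on $[0,1]$ the map $g$ is the tent map $\Delta$ and $f:=g^k\in\fR(\srelu;2,2;k)$ agrees on $[0,1]$ with $\Delta^k$. Now $\Delta^k$ is a sawtooth with $2^{k-1}$ teeth: it equals $0$ at every even multiple of $2^{-k}$ in $[0,1]$ and $1$ at every odd multiple. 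Since consecutive points of the $n$-ap ($n=2^k$) sit at consecutive multiples of $2^{-k}$ with labels alternating $1,0,1,0,\dots$, we get $\tilde f(x_i)=y_i$ for every $i$, hence $\cRz(f)=0$.

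\emph{Lower bound.} The key lemma is: every $g\in\fN(\sigma;m,l)$ with $\sigma$ $t$-sawtooth is piecewise affine (not necessarily continuous) on $\R$ with at most $p:=(tm)^l$ pieces. I would prove this by induction on the number of layers. If the (at most $m$) outputs of layer $j-1$ are jointly affine on a common partition of $\R$ into $N$ intervals, then every pre-activation in layer $j$ -- an affine combination of those outputs -- is affine on each of those $N$ intervals, and composing it with the $t$-piece function $\sigma$ subdivides each interval into at most $t$ sub-intervals; so each layer-$j$ output has at most $tN$ pieces, and the common refinement over the at most $m$ outputs of layer $j$ has at most $mtN$ pieces. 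Starting from $N=1$ for the scalar input and iterating $l$ times gives at most $(tm)^l$ pieces, and the output node inherits this bound. Consequently $\{x:g(x)\ge 1/2\}$ meets each affine piece of $g$ in a single sub-interval, so it is a union of at most $p$ intervals; hence, reading $\tilde g$ along the ordered $n$-ap points $x_1<\dots<x_n$, the binary sequence $\tilde g(x_1),\dots,\tilde g(x_n)$ breaks into at most $2p+1$ maximal constant blocks.

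Finally I convert block structure into an error count. On a block of $j$ consecutive $n$-ap points the true labels strictly alternate, so any constant prediction is wrong on at least $\lfloor j/2\rfloor$ of them, and $\lfloor j/2\rfloor\ge j/3$ whenever $j\ge 2$. Among the at most $2p+1$ blocks at most $2p+1$ can be singletons, so at least $n-(2p+1)$ of the $n$ points lie in blocks of size $\ge 2$ and together contribute at least $\tfrac13\bigl(n-2p-1\bigr)$ errors; therefore
\[
  \cRz(g)\ \ge\ \frac{n-2(tm)^l-1}{3n}\ \ge\ \frac{n-4(tm)^l}{3n},
\]
using $(tm)^l\ge 1$ (and the claim is vacuous when $n-4(tm)^l<0$). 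The main obstacle is the piece-counting lemma: it carries the conceptual weight of the separation -- depth lets the piece count \emph{multiply} by $tm$ per layer whereas width only \emph{adds} -- and its proof needs some care to handle a possibly discontinuous $\sigma$ (e.g.\ a decision stump) and to get the layer-by-layer common-refinement bookkeeping right. The closing combinatorics are routine, and the slack in the constants ($4$ rather than $2$, and the $3$ in the denominator from $\lfloor j/2\rfloor\ge j/3$) is simply the price of a clean closed form rather than the sharpest bound.
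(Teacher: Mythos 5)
Your proposal is correct and follows essentially the same route as the paper: the identical mirror/tent network $\srelu(2\srelu(x)-4\srelu(x-\tfrac12))$ iterated $k$ times for the upper bound, the same layer-by-layer induction showing at most $(tm)^l$ affine pieces, and the same ``alternating labels in few constant buckets'' count for the lower bound. Your bookkeeping via blocks of size at least $2$ (giving $n-2(tm)^l-1$ before relaxing to $n-4(tm)^l$) is a marginally tighter variant of the paper's count over buckets of size at least $3$, but it is the same argument.
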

This more refined result can thus say, for example,
that on the $2^k$-ap one needs exponentially (in $k$) many parameters when boosting decision stumps,
linearly many parameters with a deep network,
and constantly many parameters with a recurrent network.

\section{Analysis}
\label{sec:analysis}

This section will first prove the lower bound via a counting argument, simply tracking the number of times a function within $\fN(\sigma;m,l)$
can cross 1/2.
The upper bound will exhibit a network in $\fN(\srelu;2,2)$ which can be composed with itself $k$ times to exactly
fit the $n$-ap.
These bounds together prove \Cref{fact:gap}, which in turn implies \Cref{fact:gap:simplified}.

\subsection{Lower bound}

The lower bound is proved in two stages.
First, composing and summing sawtooth functions must also yield a sawtooth function,
thus elements of $\fN(\sigma;m,l)$ are sawtooth whenever $\sigma$ is.
Secondly, a sawtooth function can not cross $1/2$ very often, meaning it can't hope to match
the quickly changing labels of the $n$-ap.

To start, $\fN(\sigma;m,l)$ is sawtooth as follows.
\begin{lemma}
  \label{fact:sawtooth_props:2}
  If $\sigma$ is $t$-sawtooth, then every $f\in \fN(\sigma;m,l)$ with $f : \R\to\R$ is $(tm)^l$-sawtooth.
\end{lemma}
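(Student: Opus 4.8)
The plan is to prove this by induction on the number of layers $l$, tracking how the sawtooth count grows under the two operations that build up a network: taking affine combinations of node outputs, and composing with $\sigma$. The two elementary sub-facts I will establish first are: (a) if $g_1,\dots,g_r:\R\to\R$ are $t_1$-, \dots, $t_r$-sawtooth, then any affine combination $x\mapsto b + \sum_i a_i g_i(x)$ is $(1 + \sum_i (t_i-1))$-sawtooth, since the breakpoints of the sum are among the union of the breakpoints of the summands, and $t_i$ pieces means $t_i - 1$ breakpoints; and (b) if $g$ is $p$-sawtooth and $\sigma$ is $t$-sawtooth, then $\sigma\circ g$ is at most $pt$-sawtooth — on each of the $p$ affine pieces of $g$, the image is an interval on which $g$ is affine (possibly constant), and $\sigma$ restricted to that interval has at most $t$ pieces, so pulling back gives at most $t$ subpieces, for $pt$ total. (A cleaner bound for (a) in the form I actually need: an affine combination of $m$ functions each $q$-sawtooth is at most $(qm - m + 1) \le qm$-sawtooth; I will just use the crude $\le qm$ to keep constants clean, or carry the exact count if it matters downstream — it does not, since the theorem only uses $(tm)^l$.)

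Now the induction. For the base case $l=1$: a single layer has at most $m$ nodes, each computing $x\mapsto \sigma(w_0 + wx)$, which by (b) with $g$ being the affine (hence $1$-sawtooth, or $2$-sawtooth if one insists on counting a trivial breakpoint — I will use $1$-sawtooth for a genuinely affine map) map $x \mapsto w_0 + wx$ is $t$-sawtooth; then $f$ is an affine combination of these $m$ outputs (the output node being affine, or one more application of the layer convention), so by (a) it is at most $tm = (tm)^1$-sawtooth. For the inductive step, suppose every function computed by the first $l-1$ layers (as a map $\R\to\R$, or more precisely each coordinate of the vector-valued output of layer $l-1$) is $(tm)^{l-1}$-sawtooth. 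Each node in layer $l$ takes an affine combination of these at most $m$ incoming $(tm)^{l-1}$-sawtooth functions — by (a) that affine combination is at most $m\cdot(tm)^{l-1}$-sawtooth — and then applies $\sigma$, which by (b) multiplies the piece count by at most $t$, yielding at most $t\cdot m\cdot (tm)^{l-1} = (tm)^l$-sawtooth for each layer-$l$ node. Finally $f$ itself is an affine readout of these, and one checks this does not increase the exponent (or, absorbing the readout into ``layer $l$'' as the paper's indexing presumably does, there is nothing further to check); in any case $f$ is $(tm)^l$-sawtooth.

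The main obstacle is purely bookkeeping: being careful about what ``$m$ nodes per layer'' and ``$l$ layers'' mean for the affine input/output maps, so that the constants compound to exactly $(tm)^l$ and not something like $(tm)^l \cdot t$ or $(tm + 1)^l$. In particular I need sub-fact (b) in the sharp form ``piece count is \emph{multiplicative} under composition, not additive,'' and I need to make sure the affine pre-activation inside each node is counted as contributing a factor $m$ (from summing $m$ parents) rather than a factor $m+1$. The substantive mathematical content — that breakpoints of sums live in the union of breakpoints, and that composition refines each piece into at most $t$ pieces — is genuinely elementary; once (a) and (b) are stated precisely the induction is immediate.
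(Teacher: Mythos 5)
Your proposal is correct and follows essentially the same route as the paper: establish that summation of sawtooth functions adds piece counts (breakpoints of a sum lie in the union of breakpoints) while composition multiplies them, then induct over layers showing each node in layer $i$ is $(tm)^i$-sawtooth. Your bookkeeping worry about the readout is resolved by the paper's convention that the output is itself a node of the final layer, so nothing extra accumulates.
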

The proof is straightforward and deferred momentarily.  The key observation is that adding together sawtooth functions
grows the number of regions very slowly, whereas composition grows the number very quickly, an early
sign of the benefits of depth.

Given a sawtooth function, its classification error on the $n$-ap may be lower bounded as follows.
\begin{lemma}
  \label{fact:representation:lb}
  Let $((x_i,y_i))_{i=1}^n$ be given according to the $n$-ap.
  Then every $t$-sawtooth function $f:\R\to\R$ satisfies
  $\cRz(f) \geq (n - 4t)/ (3n)$.
\end{lemma}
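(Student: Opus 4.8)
The plan is to count, for a $t$-sawtooth $f$, how many of the $n$-ap points it can possibly classify correctly, and show this is at most $n/3 + 4t/3$ or so. The starting observation is that within any single affine piece of $f$, the function is monotone (or constant), so its classifier $\tilde f = \1[f \geq 1/2]$ switches value at most once on that piece. Since $f$ has $t$ affine pieces, $\tilde f$ is itself a $\{0,1\}$-valued function with at most $t$ "blocks" of constant value — more precisely, the real line is partitioned into at most $2t-1$ maximal intervals on which $\tilde f$ is constant (each affine piece contributes at most one internal switch). Actually it is cleanest to say: $\tilde f$ changes value at most $t$ times as its argument increases (once per affine piece, in the worst case), hence the $n$-ap points $x_1 < x_2 < \cdots < x_n$ are split by these crossings into at most $t+1$ consecutive runs, on each of which $\tilde f$ is constant.

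Next I would bound the number of correct predictions run by run. On a run of consecutive $n$-ap points, the true labels alternate $0,1,0,1,\ldots$, while $\tilde f$ is constant; so among any $r$ consecutive points in a run, $\tilde f$ agrees with at most $\lceil r/2 \rceil$ of them. Summing over the at most $t+1$ runs, if the runs have sizes $r_1, \ldots, r_s$ with $s \leq t+1$ and $\sum_j r_j = n$, the number of correctly classified points is at most $\sum_j \lceil r_j/2 \rceil \leq \sum_j (r_j/2 + 1/2) = n/2 + s/2 \leq n/2 + (t+1)/2$. That already gives $\cRz(f) \geq 1/2 - (t+1)/(2n)$, which is stronger than what is claimed for large $n$; but I should double-check the crossing count, since a crossing of $1/2$ can occur at a non-point location and an affine piece could in principle straddle a point in a way that needs more care. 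The bound $(n-4t)/(3n) = 1/3 - 4t/(3n)$ in the statement is weaker than $1/2 - (t+1)/(2n)$, so presumably the authors are being generous (or the factor $3$ is chosen to make a later step clean); in any case, the $1/2 - (t+1)/(2n)$ estimate implies the claimed $(n-4t)/(3n)$ whenever $n \geq$ a small constant, and one can handle tiny $n$ directly since then $(n-4t)/(3n)$ may be $\le 0$. So I would either prove the sharper bound and deduce the stated one, or simply run the same run-counting argument with looser constants to land exactly on $(n-4t)/(3n)$.

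The main obstacle is pinning down the exact relationship between "$f$ has $t$ affine pieces" and "$\tilde f$ has at most $t$ value-changes," being careful at breakpoints: a breakpoint of $f$ where the value passes through $1/2$ contributes a crossing, and an affine piece on which $f$ is nonconstant and crosses $1/2$ contributes another — so naively one might fear up to $2t-1$ crossings rather than $t$. But on a single affine piece $f$ is monotone, so $\tilde f$ switches at most once inside that piece's interior; and at the $t-1$ breakpoints $\tilde f$ can switch at most once each; so the total number of switches of $\tilde f$ is at most $t + (t-1) = 2t-1$, giving at most $2t$ runs. Redoing the count with $s \leq 2t$ yields at most $n/2 + t$ correct, i.e. $\cRz(f) \geq 1/2 - 2t/n \geq (n-4t)/(3n)$ for $n \ge 3$ (since $1/2 - 2t/n \ge 1/3 - 4t/(3n) \iff 1/6 \ge 2t/(3n)$, i.e. $n \ge 4t$, and when $n < 4t$ the claimed bound is nonpositive and trivially holds). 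This is the version I would write up: count affine pieces, bound value-changes of $\tilde f$ by $2t-1$, split the $n$-ap into $\le 2t$ runs, use the alternating-label ceiling bound on each run, and conclude.
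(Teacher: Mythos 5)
Your proposal is correct and follows essentially the same route as the paper: both bound the number of sign changes of $\tilde f$ by $2t-1$ (at most one per affine piece plus one per breakpoint), split the $n$-ap into at most $2t$ runs on which $\tilde f$ is constant, and use the alternating labels to force errors. The only difference is the final count --- the paper discards runs containing at most two points and charges a $1/3$ error fraction to the remaining $\geq n-4t$ points to land exactly on $(n-4t)/(3n)$, while you count at most $\lceil r_j/2\rceil$ correct points per run to get the sharper $1/2 - t/n$ and then observe that it implies the stated bound.
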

\begin{proof}
  Recall the notation $\tilde f(x) := \1[f(x) \geq 1/2]$, whereby $\cRz(f) := n^{-1}\sum_i \1[y_i\neq \tilde f(x_i)]$.
  Since $f$ is piecewise monotonic with a corresponding partition $\R$ having at most $t$ pieces,
  then $f$ has at most $2t-1$ crossings of 1/2: at most one within each interval of the partition, and at most 1
  at the right endpoint of all but the last interval.
  Consequently, $\tilde f$ is piecewise \emph{constant},
  where the corresponding partition of $\R$ is into at most $2t$ intervals.
  This means $n$ points with alternating labels must land in $2t$ buckets,
  thus the total number of points landing in buckets with at least three points
  is at least $n-4t$.
  Since buckets are intervals and signs must alternate within any such interval,
  at least a third of the points in any of these buckets are labeled incorrectly by $\tilde f$.
\end{proof}

To close, the proof of \Cref{fact:sawtooth_props:2} proceeds as follows.
First note how adding and composing sawtooths grows their complexity.

\begin{lemma}
  \label{fact:sawtooth_props}
  Let $f:\R\to\R$ and $g:\R\to\R$ be respectively $k$- and $l$-sawtooth.
  Then $f+g$ is $(k+l)$-sawtooth, and $f\circ g$ is $kl$-sawtooth.
\end{lemma}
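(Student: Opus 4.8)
The plan is to prove the two assertions separately, each by reducing to the trivial observation that the restriction of an affine function to a subinterval, and the composition of two affine functions, are again affine; everything else is bookkeeping the number of pieces.

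For $f+g$: record the (at most) $k-1$ interior breakpoints of $f$ and the (at most) $l-1$ interior breakpoints of $g$, and take their union, a set of at most $k+l-2$ points that cuts $\R$ into at most $k+l-1$ intervals. On any one of these intervals neither $f$ nor $g$ has a breakpoint, so both are affine there and hence so is $f+g$; this exhibits $f+g$ as $(k+l-1)$-sawtooth, and a fortiori $(k+l)$-sawtooth, under the convention that an $s$-sawtooth function is also $t$-sawtooth for any $t\geq s$ (any affine piece may be split in two).

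For $f\circ g$: let $I_1,\dots,I_l$ be the pieces of $g$, so that $g$ agrees with some affine map $x\mapsto c_j x + d_j$ on $I_j$. I would bound, for each fixed $j$, the number of pieces of $f\circ g$ inside $I_j$ by $k$; summing over $j$ then yields at most $kl$ pieces overall. If $c_j = 0$ then $g$, hence $f\circ g$, is constant on $I_j$, a single piece. If $c_j \neq 0$ then $x\mapsto c_j x + d_j$ is a bijection of $\R$, so the preimages of the at most $k-1$ breakpoints of $f$ form at most $k-1$ points, subdividing $I_j$ into at most $k$ subintervals; on each such subinterval $g$ takes values within a single affine piece of $f$, so $f\circ g$ is affine there, being a composition of two affine maps. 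In either case $I_j$ contributes at most $k$ pieces, as desired.

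The argument is essentially bookkeeping, and the only points that need a word of care are the degenerate case of a constant (zero-slope) piece of $g$, handled above, and the assignment of shared breakpoints to the piece on their left or right; the latter is harmless, since ``affine on a piece'' may always be read with the endpoints included, so endpoints never force an extra piece. I do not expect any genuine obstacle.
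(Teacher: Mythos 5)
Your proof is correct and follows essentially the same route as the paper's: for the sum, you form the common refinement of the two partitions (phrased via unions of breakpoints rather than intersections of intervals), and for the composition, you bound the number of pieces of $f\circ g$ within each affine piece of $g$ by $k$ (phrased via preimages of the breakpoints of $f$ rather than the image $f(g(U_g))$). Your explicit handling of the zero-slope case and the endpoint convention is a harmless bit of extra care.
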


\begin{proof}[Proof of \Cref{fact:sawtooth_props}]
  Let $\cI_f$ denote the partition of $\R$ corresponding to $f$, and $\cI_g$ denote the partition
  of $\R$ corresponding to $g$.

  First consider $f+g$, and moreover any intervals $U_f\in \cI_f$ and $U_g\in\cI_g$.
  Necessarily, $f+g$ has a single slope along $U_f\cap U_g$.  Consequently, $f+g$ is $|\cI|$-sawtooth,
  where $\cI$ is the set of all intersections of intervals from $\cI_f$ and $\cI_g$, meaning
  $\cI := \{U_f\cap U_g : U_f\in\cI_f, U_g\in\cI_g\}$.  By sorting the left endpoints of elements of
  $\cI_f$ and $\cI_g$, it follows that $|\cI|\leq k+l$ (the other intersections are empty).

  Now consider $f\circ g$, and in particular consider the image $f(g(U_g))$ for some interval $U_g\in \cI_g$.
  $g$ is affine with a single slope along $U_g$, therefore $f$ is being considered along a single unbroken interval
  $g(U_g)$.  However, nothing prevents $g(U_g)$ from hitting all the elements of $\cI_f$; since $U_g$ was arbitrary,
  it holds that $f\circ g$ is $(|\cI_f|\cdot|\cI_g|)$-sawtooth.
\end{proof}

The proof of \Cref{fact:sawtooth_props:2} follows by induction over layers of $\fN(\sigma;m,l)$.

\begin{proof}[Proof of \Cref{fact:sawtooth_props:2}]
  The proof proceeds by induction over layers, showing the output of each node in layer $i$ is $(tm)^i$-sawtooth
  as a function of the neural network input.
  For the first layer, each node starts by computing $x\mapsto w_0 + \ip{w}{x}$, which is itself affine
  and thus 1-sawtooth, so the full node computation $x\mapsto \sigma(w_0 + \ip{w}{x})$ is $t$-sawtooth by \Cref{fact:sawtooth_props}.
  Thereafter, the input to layer $i$ with $i>1$ is a collection of functions $(g_1,\ldots,g_{m'})$ with $m'\leq m$ and $g_j$ being $(tm)^{i-1}$-sawtooth
  by the inductive hypothesis; consequently, $x\mapsto w_0 + \sum_j w_j g_j(x)$ is $m(tm)^{i-1}$-sawtooth by \Cref{fact:sawtooth_props},
  whereby applying $\sigma$ yields a $(tm)^{i}$-sawtooth function (once again by \Cref{fact:sawtooth_props}).
\end{proof}

\subsection{Upper bound}

\begin{wrapfigure}{R}{0.4\textwidth}
  \vspace{-23pt}

  \includegraphics[width=0.4\textwidth]{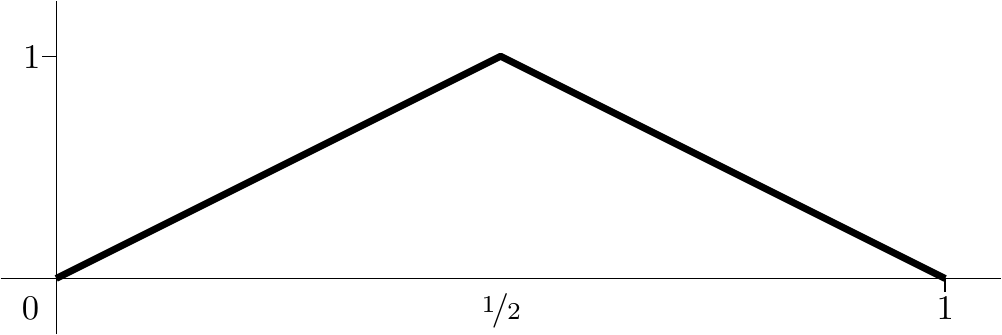}

  \vspace{10pt}

  \includegraphics[width=0.4\textwidth]{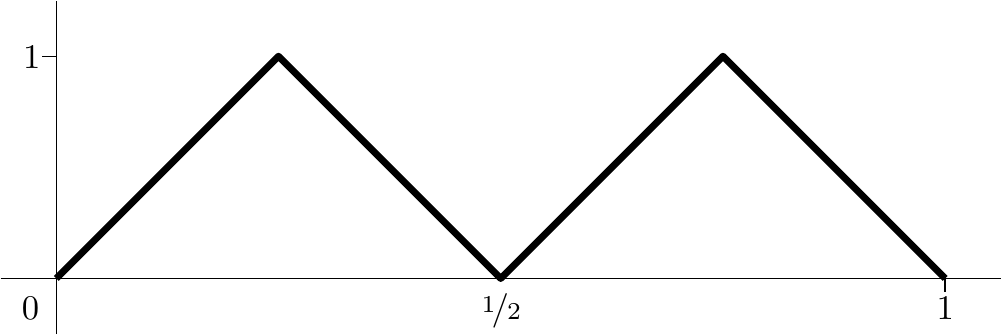}

  \vspace{10pt}

  \includegraphics[width=0.4\textwidth]{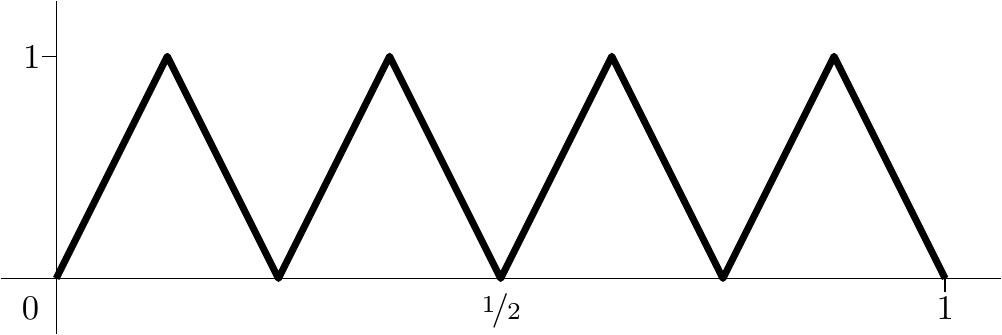}

  \vspace{0pt}

  \caption{$\fm$, $\fm^2$, and $\fm^3$.}
  \label{fig:fm}
  \vspace{-30pt}
\end{wrapfigure}
Consider the \emph{mirror map} $\fm :\R\to\R$, depicted in \Cref{fig:fm},
and defined as
\[
  \fm(x)
  :=
  \begin{cases}
    2x
    &\textup{when $0 \leq x \leq 1/2$},
    \\
    2(1-x)
    &\textup{when $1/2 < x \leq 1$},
    \\
    0
    &\textup{otherwise}.
  \end{cases}
\]
Note that $\fm\in \fN(\srelu;2,2)$; for instance, $\fm(x) = \srelu(2\srelu(x) - 4\srelu(x-1/2))$.
The upper bounds will use $\fm^k \in \fR(\srelu;2,2;k) \subseteq \fN(\srelu;2,2k)$.

To assess the effect of the \emph{post-composition} $\fm \circ g$ for any $g:\R\to\R$, note that $\fm\circ g$ is $2g(x)$ whenever
$g(x) \in[0,1/2]$, and $2(1-g(x))$ whenever $g(x) \in (1/2,1]$.
Visually, this has the effect of reflecting (or folding) the graph of $g$ around the
horizontal line through 1/2 and then rescaling by 2.
Applying this reasoning to $\fm^k$ leads to $\fm^2$ and $\fm^3$ in \Cref{fig:fm}, whose peaks and troughs match the $2^2$-ap and $2^3$-ap,
and moreover have the form of a piecewise affine approximations to sinusoids; indeed,
it was suggested before, by \citet{bengio_lecun_fft},
that Fourier transforms are efficiently represented with deep networks.

These compositions may be written as follows.

\begin{lemma}
  \label{fact:fmk}
  Let real $x \in [0,1]$ and positive integer $k$ be given,
  and choose the unique nonnegative integer $i_k \in \{0,\ldots,2^{k-1}\}$ and real $x_k \in [0, 1)$
  so that $x = (i_k + x_k)2^{1-k}$.
  Then
  \[
    \fm^k(x) = \begin{cases}
      2x_k
      &\textup{when $0 \leq x_k \leq 1/2$},
      \\
      2(1 - x_k)
      &\textup{when $1/2 < x_k < 1$}.
    \end{cases}
  \]
\end{lemma}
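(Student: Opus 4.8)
The plan is to induct on $k$, writing $\fm^{k+1} = \fm\circ\fm^k$ and exploiting the folding picture described above: post-composing with $\fm$ reflects the part of the graph lying above $1/2$ back down and rescales by $2$, so each ``tent'' of the level-$k$ graph splits into two tents of half the width at level $k+1$. The base case $k=1$ is immediate. For $x\in[0,1)$ one has $i_1=0$ and $x_1=x$, so the claimed formula is exactly the definition of $\fm$ on $[0,1)$; for $x=1$ one has $i_1=1$, $x_1=0$, and $\fm(1)=0=2x_1$. Along the way the induction should also carry the trivial invariant that $\fm^k$ maps $[0,1]$ into $[0,1]$ --- visible directly from the level-$k$ formula, since $2x_k\in[0,1]$ when $x_k\le 1/2$ and $2(1-x_k)\in(0,1)$ when $x_k>1/2$ --- so that evaluating $\fm$ on the output of $\fm^k$ only ever invokes the first two branches of the definition of $\fm$.

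For the inductive step, fix $x\in[0,1]$ and first record how the level-$(k+1)$ data relate to the level-$k$ data: from $x=(i_k+x_k)2^{1-k}=(2i_k+2x_k)2^{-k}$ one reads off that $x_k<1/2$ forces $i_{k+1}=2i_k$ and $x_{k+1}=2x_k$, while $x_k\ge 1/2$ forces $i_{k+1}=2i_k+1$ and $x_{k+1}=2x_k-1$. Now substitute the inductive formula for $\fm^k(x)$ into $\fm$ and compare, splitting on the four regimes $x_k\in[0,\tfrac14]$, $(\tfrac14,\tfrac12]$, $(\tfrac12,\tfrac34)$, $[\tfrac34,1)$. For example, if $x_k\in[0,\tfrac14]$ then $\fm^k(x)=2x_k\in[0,\tfrac12]$, hence $\fm^{k+1}(x)=2(2x_k)=4x_k$, whereas $x_{k+1}=2x_k\in[0,\tfrac12]$ makes the target value $2x_{k+1}=4x_k$; the other three regimes are the same kind of one-line check (the regime $(\tfrac12,\tfrac34)$ uses $\fm^k(x)=2(1-x_k)>\tfrac12$, the regime $[\tfrac34,1)$ uses $\fm^k(x)=2(1-x_k)\le\tfrac12$, and so on), each landing on either $2x_{k+1}$ or $2(1-x_{k+1})$ with $x_{k+1}$ as just described.

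The only delicate point --- and the ``main obstacle,'' such as it is --- is bookkeeping the half-open conventions at the dyadic breakpoints $x_k\in\{\tfrac14,\tfrac12,\tfrac34\}$ and at $x=1$: one must verify that the two expressions agree there, so that the case split is consistent and the choice of which side owns each endpoint is harmless (e.g.\ at $x_k=\tfrac12$ both sides give $0$ with $x_{k+1}=0$). There is no analytic content beyond this elementary case analysis; alternatively one could observe that $\fm$ restricted to $[0,1]$ is the tent map $x\mapsto 1-|2x-1|$ and invoke the standard description of its iterates, but the direct induction is shorter and self-contained.
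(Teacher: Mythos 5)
Your proof is correct, but it decomposes the iterate the opposite way from the paper. You write $\fm^{k+1}=\fm\circ\fm^{k}$ and analyze the \emph{post}-composition (folding) of the outer $\fm$, which forces you to track where $\fm^{k}(x)$ lands relative to $1/2$ and hence to carry the range invariant $\fm^{k}([0,1])\subseteq[0,1]$ and split into four regimes of $x_k$; your sample computations and the endpoint bookkeeping at $x_k\in\{\tfrac14,\tfrac12,\tfrac34\}$ all check out. The paper instead writes $\fm^{l+1}=\fm^{l}\circ\fm$ and analyzes the \emph{pre}-composition of the inner $\fm$: the mirror identity $(\fm^{l}\circ\fm)(x)=(\fm^{l}\circ\fm)(1-x)$ reduces everything to $x\in[0,1/2]$, where $\fm$ is just doubling, so the inductive hypothesis is applied directly to $2x$ via the relation $2x=(i_{l+1}+x_{l+1})2^{-l}$ with no case analysis on the output of the inner iterate. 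The trade-off is that the paper's route is shorter (two cases, exploiting symmetry, no range invariant needed beyond $2x\in[0,1]$), while yours is more mechanical and makes the digit-shift recursion $i_{k+1}=2i_k$ or $2i_k+1$, $x_{k+1}=2x_k$ or $2x_k-1$ fully explicit, which arguably explains more transparently why the formula holds. Both are complete and elementary; no gap.
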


In order to prove this form and develop a better understanding of $\fm$,
consider its \emph{pre-composition} behavior $g\circ \fm$ for any $g:\R\to\R$.
Now, $(g\circ \fm)(x) = g(2x)$ whenever $x\in [0,1/2]$, but
$(g\circ \fm)(x) = g(2-2x)$ when $x\in (1/2,1]$; whereas post-composition reflects around
the horizontal line at 1/2 and then scales vertically by 2,
pre-composition first scales horizontally by $1/2$ and then reflects around the vertical line at 1/2,
providing a condensed mirror image and motivating the name \emph{mirror map}.

\begin{proof}[Proof of \Cref{fact:fmk}]
  The proof proceeds by induction on the number of compositions $l$.
  When $l=1$, there is nothing to show.
  For the inductive step,
  the mirroring property of pre-composition with $\fm$
  combined with the symmetry of $\fm^l$ (by the inductive hypothesis)
  implies that every $x\in [0,1/2]$ satisfies
  \[
    (\fm^{l} \circ f)(x) = (\fm^l \circ f)(1 - x) = (\fm^l \circ f)(x + 1/2).
  \]
  Consequently, it suffices to consider $x \in [0,1/2]$,
  which by the mirroring property means $(\fm^l \circ \fm)(x) = \fm^l(2x)$.
  Since the unique nonnegative integer $i_{l+1}$ and real $x_{l+1}\in [0,1)$ satisfy
  $2x = 2(i_{l+1} + x_{l+1})2^{-l-1} = (i_{l+1} + x_{l+1})2^{-l}$,
  the inductive hypothesis applied to $2x$ grants
  \[
    (\fm^l \circ f)(x) = \fm^l(2x) = \begin{cases}
      2x_{l+1}
      &\textup{when $0 \leq x_{l+1} \leq 1/2$},
      \\
      2(1 - x_{l+1})
      &\textup{when $1/2 < x_{l+1} < 1$},
    \end{cases}
  \]
  which completes the proof.
\end{proof}

Before closing this subsection, it is interesting to view $\fm^k$ in one more way,
namely its effect on $((x_i,y_i))_{i=1}^n$ provided by the $n$-ap with $n:=2^k$.
Observe that $((\fm(x_i),y_i))_{i=1}^n$ is an $(n/2)$-ap with all points duplicated except $x_1 = 0$,
and an additional point with $x$-coordinate $1$.

\subsection{Proof of \Cref{fact:gap,fact:gap:simplified}}

It suffices to prove \Cref{fact:gap}, which yields \Cref{fact:gap:simplified} since $\srelu$ is 2-sawtooth,
whereby the condition
$m \leq 2^{(k-3)/l - 1}$
implies
\[
  \frac {n - 4(2m)^l}{3n}
  = \frac 1 3 - (2m)^l2^{-k} \left(\frac 4 3 \right)
  \geq \frac 1 3 - 2^{k-3} 2^{-k} \left(\frac 4 3 \right)
  = \frac 1 3 - \frac 1 6,
\]
and the upper bound transfers since $\fR(\srelu;2,2;k)\subseteq \fN(\srelu;2,2k)$.

Continuing with \Cref{fact:gap}, any $f\in \fN(\sigma;m,l)$ is $(tm)^l$-sawtooth by \Cref{fact:sawtooth_props:2},
whereby \Cref{fact:representation:lb} gives the lower bound.  For the upper bound, note that $\fm^k\in \fR(\srelu;2,2;k) \subseteq \fN(\srelu;2,2k)$
by construction, and moreover $\fm^k(x_i) = \tilde{\fm^k}(x_i) =y_i$ on every $(x_i,y_i)$ in the $n$-ap by \Cref{fact:fmk}.

\section{Related work}
\label{sec:discussion}

The standard classical result on the representation power of neural networks is due to \citet{cybenko},
who proved that neural networks can approximate continuous functions over $[0,1]^d$ arbitrarily well.
This result, however, is for flat networks.

An early result showing the benefits of depth is due to \citet{hastad_thesis},
who established, via an incredible proof, that boolean circuits consisting only of and gates and or gates
require exponential size in order to approximate the parity function well.
These gates correspond to multiplication and addition over the boolean domain, and moreover the parity
function is the Fourier basis over the boolean domain; as mentioned above, $\fm^k$ as used here
is a piecewise affine approximation of a Fourier basis,
and it was suggested previously by \citet{bengio_lecun_fft} that Fourier transforms admit
efficient representations with deep networks.
Lastly, note that \citet{hastad_thesis}'s
work has one of the same weaknesses as the present result, namely of only controlling a countable
family of functions which is in no sense dense.

More generally, networks consisting of sum and product nodes, but now over the reals, have been studied in the
machine learning literature, where it was showed by \citet{bengio_sumproduct_separation} that again there is
an exponential benefit to depth.  While this result was again for a countable class of functions,
more recent work by \citet{tensor_deep_repr_power} aims to give a broader characterization.

Still on the topic of representation results, there is a far more classical result which deserves mention.
Namely, the surreal result of \citet{kolmogorov_nn} states
that a continuous function $f:[0,1]^d\to\R$ can be \emph{exactly} represented by a network
with $\cO(d^2)$ nodes in 3 layers;
this network needs multiple distinct nonlinearities and therefore is not an element of $\fN\left(\sigma; \cO(d^2), 3\right)$ for a fixed $\sigma$,
however one can treat these specialized nonlinearities as goalposts for other representation results.
Indeed, similarly to the $\fm^k$ used here,
\citeauthor{kolmogorov_nn}'s nonlinearities have fractal structure.

Lastly, while this note was only concerned with finite sets of points, it is worthwhile to mention the relevance of
representation power to statistical questions.  Namely, by the seminal result of \citet[Theorem 8.14]{anthony_bartlett_nn},
the VC dimension of $\fN(\srelu;m,l)$ is at most $\cO(m^8l^2)$, indicating that these exponential representation benefits
directly translate into statistical savings.
Interestingly, note that $\fm^k$ has an exponentially large Lipschitz constant (exactly $2^k$), and thus
an elementary statistical analysis via Lipschitz constants and Rademacher complexity \citep{bartlett_mendelson_rademacher}
can inadvertently erase the benefits of depth as presented here.

\addcontentsline{toc}{section}{References}
\bibliographystyle{plainnat}
\bibliography{../sawtooth}
\appendix

\end{document}